\documentclass[letterpaper, 10 pt, conference]{ieeeconf}

\IEEEoverridecommandlockouts

\overrideIEEEmargins

\pdfobjcompresslevel=0
\pdfminorversion=4

\usepackage{amsmath} 
\usepackage{amssymb} 
\usepackage{xcolor}

\usepackage{amsthm}

\usepackage[noadjust]{cite}

\theoremstyle{plain}

\usepackage{makecell}
\newtheorem{theorem}{Theorem}
\newtheorem{lemma}[theorem]{Lemma}
\newtheorem{remark}{Remark}

\usepackage{graphicx}

\usepackage{subcaption}
\usepackage{hyperref}

\title{\LARGE \bf
Courteous MPC for Autonomous Driving with CBF-inspired \\Risk Assessment
}

\author{Yanze Zhang$^{1}$, Yiwei Lyu$^{2}$, Sude E. Demir$^{3}$, Xingyu Zhou$^{3}$ \\ Yupeng Yang$^{1}$, Junmin Wang$^{3}$, \textit{IEEE Fellow}, and Wenhao Luo$^{1}$
\thanks{$^{1}$Yanze Zhang, Yupeng Yang, and Wenhao Luo are with the Department of Computer Science, University of North Carolina at Charlotte, Charlotte, NC 28223, USA. Email: {\tt\small \{yzhang94, yyang52, wenhao.luo\}@charlotte.edu}.}
\thanks{$^{2}$Yiwei Lyu is with the Department of Electrical and Computer Engineering, Carnegie Mellon University,
        Pittsburgh, PA, 15213, USA. Email: {\tt\small yiweilyu@andrew.cmu.edu}.}
\thanks{$^{3}$Sude E. Demir, Xingyu Zhou, and Junmin Wang are with the Walker Department of Mechanical Engineering, University of Texas at Austin, Austin, TX, 78712, USA. Email: {\tt\small \{sude.demir, xingyu.zhou, jwang\}@austin.utexas.edu}.}
\thanks{This work was partially supported by NSF Awards 2312465 and 2312466.}
}

\begin{document}
\maketitle
\thispagestyle{empty}
\pagestyle{empty}

\begin{abstract}
With more autonomous vehicles (AVs) sharing roadways with human-driven vehicles (HVs), ensuring safe and courteous maneuvers that respect HVs' behavior becomes increasingly important. To promote both safety and courtesy in AV's behavior, an extension of Control Barrier Functions (CBFs)-inspired risk evaluation framework is proposed in this paper by considering both noisy observed positions and velocities of surrounding vehicles. The perceived risk by the ego vehicle can be visualized as a risk map that reflects the understanding of the surrounding environment and thus shows the potential for facilitating safe and courteous driving. By incorporating the risk evaluation framework into the Model Predictive Control (MPC) scheme, we propose a Courteous MPC for ego AV to generate courteous behaviors that 1) reduce the overall risk imposed on other vehicles and 2) respect the hard safety constraints and the original objective for efficiency. We demonstrate the performance of the proposed Courteous MPC via theoretical analysis and simulation experiments.
\end{abstract}

\section{Introduction} 
As more autonomous vehicles (AVs) are deployed on public roads, the coexistence of AVs and human-driven vehicles (HVs) is expected to lead to frequent interactions between AVs and HVs, as well as among AVs themselves. Although AVs are designed to engineer for a combination of safety, efficiency, and driving quality, the underlying ego-centric design often overlooks the impact of AV behaviors to the surrounding environments, 
which may result in unwelcome or even dangerous actions by AVs to other vehicles \cite{sun2018courteous, wang2019enabling}.

To realize safe and courteous driving behavior for AVs, 
some methods incorporate a courtesy term as a reward feature, and use Inverse Reinforcement Learning (IRL) to learn appropriate weights for such a feature from actual human driving data \cite{sun2018courteous, wang2023active, toghi2022social, lyu2022responsibility}. Courtesy terms that have been quantified include costs related to the human driver’s tasks \cite{sun2018courteous}, active information gathering \cite{wang2023active}, and measures of selfishness or altruism through Social Value Orientation \cite{toghi2022social, lyu2022responsibility}. Other methods integrate social factors with deep learning to foster socially compliant behaviors, e.g. Social GAN \cite{gupta2018social}.
However, the vehicle’s social preferences, subject to some specific scenario or dataset, may not perform well in unfamiliar conditions.
Furthermore, it is challenging to incorporate uncertainties into the generation of these methods or to estimate the failure probability in uncertain trafﬁc scenarios, resulting in a lack of theoretical safety guarantee.

\begin{figure}
    \centering
    \includegraphics[width=3.0 in]{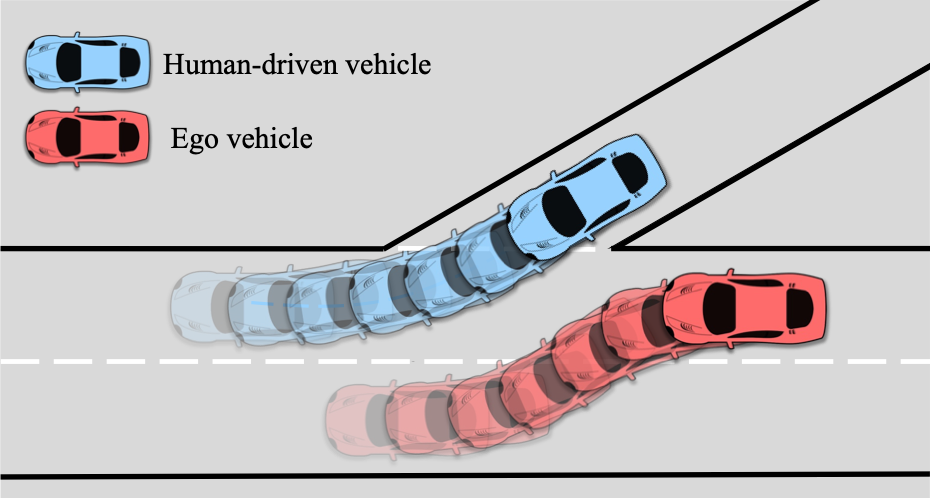}
    \caption{\small On-ramp merging scenario where the human-driven vehicle (blue) is trying to merge onto the main lane occupied by the ego vehicle (red). To accommodate the human-driven vehicle (HV) in a courteous manner, the ego vehicle could consider lane-change behavior that makes space for the HV without sacrificing much on safety and efficiency. 
    }
    \label{fig:real-world}
    \vspace{-0.5 cm}
\end{figure}

Some probabilistic approaches incorporate uncertainties into Partially Observable Markov Decision Process (POMDP) \cite{hubmann2018automated} for decision making.
Specifically, research in \cite{hsu2023interpretable} categorizes safety and courtesy into different levels and employs Hidden Markov Models (HMM) to effectively interpret the long-term patterns of behavior despite observational noise, enhancing trajectory prediction accuracy.
Other methods \cite{zhu2019chance, nyberg2021risk} formulate uncertainty-aware distance constraints as chance constraints and integrate them into the optimization problem to enhance safety in dynamic environments. However, enforcing these state-based constraints
often require enlarged bounding volumes to accommodate braking distances, potentially overestimating the probability and leading to conservative behaviors.

Benefiting from the less restrictive constraints using Control Barrier Functions (CBFs) \cite{ames2019control},
recent methods \cite{luo2020multi, lyu2021probabilistic,zhang2023occlusion, yang2024decentralized, zhou2024safety} reformulated the state-based chance constraints for collision avoidance into deterministic control constraints that enable safety guarantees with high probability as well as task efficiency.
Additionally, work in \cite{lyu2023risk} has introduced a risk evaluation framework inspired by CBF
\cite{lyu2023cbf}
under motion uncertainty and integrated it into a risk-aware controller to facilitate effective decision-making in complex dynamic environments.
Building on these foundations, we develop an extension to this risk evaluation framework that takes into account both noisy observed positions and motions, which are then integrated with a model predictive controller (MPC) to ensure smooth and robust decision-making for AVs. This integrated controller not only provides effective safety guarantees but also demonstrates courtesy by allowing additional space for the maneuvers of surrounding vehicles.
Our contributions in this paper are threefold:
\begin{itemize}
    \item A novel extension to the existing CBF-inspired risk evaluation framework is proposed, which incorporates both position error and motion error to robustify the risk measurement under uncertainty;
    \item A Courteous MPC method for autonomous driving is proposed that integrates the developed risk evaluation to consider
    heterogeneous surrounding human-driven vehicles with a provable probabilistic safety guarantee;
    \item Simulations on interaction with vehicles modeled by intelligent driver model (IDM)~\cite{treiber2000congested} and NGSIM dataset~\cite{alexiadis2004next} are shown to validate the effectiveness of the proposed risk evaluation framework and Courteous MPC.
\end{itemize}

\section{Background}
\subsection{Control Barrier Functions}
Consider a dynamical system in control-affine form:
\begin{equation} \label{eq:Affine}
\dot{x} = f(x) + g(x)u
\end{equation}
where $x \in \mathbb{R}^d$ and $u \in \mathbb{R}^m$ are the system state vector and control input vector, respectively. The vector fields 
$f: \mathbb{R}^d \mapsto \mathbb{R}^d$ and matrix-valued function $g: \mathbb{R}^d \mapsto \mathbb{R}^{d \times m}$ are assumed to be locally Lipschitz continuous.

Given the control affine system in Eq.~\eqref{eq:Affine} and the safe set $\mathcal{H}$ defined as the zero-superlevel set of a continuously differentiable function $h: \mathbb{R}^{d} \mapsto \mathbb{R}$, Safety Barrier Certificates \cite{ames2019control} has been used to enforce the safety set forward invariance, as defined in the following lemma. 

\begin{lemma}
\label{lem:cbf}[Summarized from \cite{ames2019control}]
Given a dynamical system affine defined in Eq.~\eqref{eq:Affine} and a safe set $\mathcal{H}$ as the 0-super level set of a continuously differentiable function $h: \mathbb{R}^d \mapsto \mathbb{R}$, the function $h$ is called a control barrier function, if there exists an extended class-$\mathcal{K}$ function $\alpha(\cdot)$ such that 
$\sup_{u\in \mathbb{R}^m}\{\dot{h}(x, u)\}\geq -\alpha(h(x))$ for all $x \in \mathbb{R}^d$. The admissible control space for any Lipschitz continuous controller $u \in \mathbb{R}^m$ rendering $\mathcal{H}$ forward invariant thus becomes:
{\small\begin{align}\label{eq:cbc_lemma}
    K_\textrm{cbf}(x) = \{ u \in \mathbb{R}^m | L_f h(x) + L_gh(x)u \geq -\alpha(h(x)) \}
\end{align}}
where $L_f h(x) = \nabla h^{T}(x) f(x)$ and
$L_g h(x) = \nabla h^{T}(x) g(x)$, respectively. 
\end{lemma}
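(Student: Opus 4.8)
The plan is to reduce the claimed forward invariance of $\mathcal{H}$ to the scalar comparison lemma applied to $h$ evaluated along closed-loop trajectories. First I would fix any Lipschitz continuous feedback $u(x) \in K_\textrm{cbf}(x)$ and note that local Lipschitz continuity of $f$, $g$, and $u$ guarantees, via Picard--Lindel\"{o}f, a unique solution $x(t)$ to the closed-loop system $\dot{x} = f(x) + g(x)u(x)$ on some maximal interval. Along this solution the chain rule gives $\dot{h}(x(t)) = \nabla h^{T}(x(t))\,\dot{x}(t) = L_f h(x(t)) + L_g h(x(t))\,u(x(t))$, so the membership $u(x)\in K_\textrm{cbf}(x)$ is \emph{exactly} the pointwise differential inequality $\dot{h}(x(t)) \geq -\alpha(h(x(t)))$ holding along the trajectory.

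The second step is to compare $h(x(t))$ with the scalar initial value problem $\dot{y} = -\alpha(y)$, $y(0) = h(x(0))$. By the comparison lemma, the differential inequality $\dot{h} \geq -\alpha(h)$ together with matching initial data yields $h(x(t)) \geq y(t)$ for all $t$ in the interval of existence. It then remains to argue that $y(t) \geq 0$ whenever $y(0) = h(x(0)) \geq 0$. Here I would use the defining property $\alpha(0) = 0$ of an extended class-$\mathcal{K}$ function: this makes $y \equiv 0$ an equilibrium of the comparison system, so a trajectory starting at $y(0) \geq 0$ cannot descend into $\{y < 0\}$. Chaining the two inequalities gives $h(x(t)) \geq y(t) \geq 0$, i.e. $x(t) \in \mathcal{H}$, which is precisely forward invariance.

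The main obstacle I anticipate is the regularity needed to make the comparison step rigorous. The standard comparison argument presumes enough smoothness of the right-hand side $-\alpha(\cdot)$ to pin down the solution $y(t)$ and to guarantee that it does not pierce the equilibrium at the origin; since $\alpha$ is only assumed continuous and strictly increasing, uniqueness for $\dot{y} = -\alpha(y)$ is not automatic. To sidestep this I would either impose local Lipschitz continuity of $\alpha$, or instead invoke Nagumo's subtangentiality condition directly on the boundary $\partial\mathcal{H} = \{x : h(x) = 0\}$: there the inequality reduces to $\dot{h} \geq -\alpha(0) = 0$, stating that the closed-loop vector field points into or tangent to $\mathcal{H}$, which is exactly the hypothesis Nagumo's theorem requires. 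Finally, I would extend the conclusion from the maximal interval of existence to all $t \geq 0$ by observing that, once $h(x(t)) \geq 0$ is established, the trajectory stays within the (assumed forward-complete) sublevel structure of $\mathcal{H}$, precluding finite escape.
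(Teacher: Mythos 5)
The paper does not prove this lemma at all: it is imported as background, ``summarized from'' \cite{ames2019control}, and used as a citation-level fact. There is therefore no in-paper argument to compare yours against; what follows judges your proposal on its own terms and against the standard literature proof.

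Your chain --- Picard--Lindel\"{o}f for the closed loop, the chain rule identifying $u(x)\in K_\textrm{cbf}(x)$ with the pointwise inequality $\dot h \geq -\alpha(h)$ along trajectories, comparison against $\dot y = -\alpha(y)$, and $\alpha(0)=0$ to keep $y$ nonnegative --- is exactly the classical argument from the original CBF papers, and you correctly isolate the one genuine technical wrinkle: an extended class-$\mathcal{K}$ function is only continuous, so the scalar comparison system need not have unique solutions. Three remarks on your repairs. First, you do not actually need uniqueness, Lipschitz $\alpha$, or Nagumo: because the lemma requires $u(x)\in K_\textrm{cbf}(x)$ for \emph{every} $x\in\mathbb{R}^d$, the inequality $\dot h \geq -\alpha(h)$ holds even where $h<0$, and there $-\alpha(h)>0$. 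So if $h(x(t_1))<0$ for some $t_1$, set $t_* = \sup\{t\le t_1 : h(x(t))\ge 0\}$; then $h(x(t_*))=0$, $h<0$ on $(t_*,t_1]$, and $\dot h>0$ there, whence by the mean value theorem $h(x(t_1))>h(x(t_*))=0$, a contradiction. This elementary barrier argument replaces the comparison lemma entirely and is immune to the regularity issue. Second, your Nagumo fallback has its own hidden gap: subtangentiality on $\partial\mathcal{H}=\{h=0\}$ is equivalent to $\dot h\ge 0$ only at points where $\nabla h\neq 0$; where the gradient vanishes, $\dot h = 0 \ge 0$ is vacuous while the Bouligand tangent cone can be strictly smaller than a half-space, which is precisely why \cite{ames2019control} adds a regular-value hypothesis ($\nabla h(x)\neq 0$ on $\partial\mathcal{H}$) to its invariance theorem --- a hypothesis absent from the lemma as restated in this paper, so you cannot invoke Nagumo without adding it. Third, your first repair (assume $\alpha$ locally Lipschitz) is costless for this paper's actual usage, since it instantiates $\alpha(h)=\gamma h$, which is globally Lipschitz; and you are right that the conclusion holds only on the maximal interval of existence, with extension to all $t\ge 0$ requiring forward completeness of the closed loop, which genuinely does not follow from the stated hypotheses.
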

This defines the set of controllers enforcing the system state $x$ staying in $\mathcal{H}$ over time if the initial system state  $x(t=0)$ is inside the set $\mathcal{H}$.

\subsection{Sample-Data Control Barrier Functions}
However, when the system states are updated discretely and so does the controller, such controller implemented under a discrete-time system may not satisfy the safety condition using the continuous constraints defined in Eq.~\eqref{eq:cbc_lemma} between sample time steps \cite{yang2020continuous}. To this end, the Sampled-data Control Barrier Functions (Sampled-data CBFs) \cite{breeden2021control} are introduced
to enforce the forward invariance of safe set using \textit{Controller Margin}\footnote{An additional term compare with Eq.~\eqref{eq:cbc_lemma} added to the right-hand side in Eq.~\eqref{eq:cbc_lemma} is called the controller margin.}, which is summarized in the following lemma.

\begin{lemma}\label{lem:sample-data cbf}[Summarized from \cite{breeden2021control}]
Given the control affine system defined in Eq.~\eqref{eq:Affine} where the state is measured discretely (i.e, the sampled states $x_k= x(k dt)$, where $k$ is the time step and $dt$ is the one time step duration and the control law is updated discretely (i.e., $u(t) = u_k(x_k)$), and the safe set $\mathcal{H}$, the admissible control space for any Lipschitz continuous controller
$u_k \in \mathbb{R}^m$ 
at the time step $k$ rendering $\mathcal{H}$ forward invariant is defined as below:
\begin{align}\label{eq:sampledata_cbc_lemma}
    S_\textrm{cbf}(x_k) = \{ u_k \in \mathbb{R}^m &| L_f h(x_k) + L_gh(x_k)u_k \notag \\
    &\geq -\alpha(h(x_k)) + l\delta dt\}
\end{align}
where $x_k$ is the observed state at time step k, $l = l_{L_fh} + l_{L_gh}u_{max} + l_\alpha(h)$ ($l_{L_fh}, l_{L_gh}$ and $l_\alpha(h)$ are the Lipschitz constants of $L_f h, L_g h$ and $\alpha(h)$, respectively), $\delta = \sup_{x \in S, u\in U} ||f(x) + g(x)u||$.
\end{lemma}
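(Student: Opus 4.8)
The plan is to show that the margin $l\delta\,dt$ in Eq.~\eqref{eq:sampledata_cbc_lemma} exactly compensates for the worst-case degradation of the continuous-time CBF constraint as the state drifts away from the sampled value $x_k$ under zero-order-hold control. Once this is established, the continuous-time condition $\dot h(x(t),u_k)\geq -\alpha(h(x(t)))$ holds throughout each sampling interval, and forward invariance of $\mathcal{H}$ then follows directly from Lemma~\ref{lem:cbf}.

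First I would introduce the constraint function $b(x,u) := L_f h(x) + L_g h(x)u + \alpha(h(x))$, so that $\dot h(x,u)=L_f h(x)+L_g h(x)u$ and the continuous CBF condition of Lemma~\ref{lem:cbf} reads simply $b(x(t),u)\geq 0$. With this notation, membership $u_k \in S_\textrm{cbf}(x_k)$ is equivalent to $b(x_k,u_k)\geq l\delta\,dt$. The goal is then to propagate this inequality forward in time: I want to show that for every $t \in [k\,dt,(k+1)dt]$ the held control $u_k$ still satisfies $b(x(t),u_k)\geq 0$, even though the constraint is only checked at the sample instant $x_k$.

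The two quantitative estimates I would combine are as follows. For the state drift, since the control is held constant and $\|\dot x\|=\|f(x)+g(x)u_k\|\leq \delta$ by the definition of $\delta$, integrating gives $\|x(t)-x_k\|\leq \int_{k\,dt}^{t}\|\dot x(s)\|\,ds \leq \delta\,dt$ over the interval. For the sensitivity of the constraint, I would bound the Lipschitz constant of the map $x\mapsto b(x,u_k)$: the terms $L_f h$ and $\alpha(h)$ contribute $l_{L_f h}$ and $l_{\alpha(h)}$, while the control-dependent term $L_g h(x)u_k$ contributes at most $l_{L_g h}\|u_k\|\leq l_{L_g h}u_{\max}$, so the total is at most $l$. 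Combining the two, $|b(x(t),u_k)-b(x_k,u_k)|\leq l\,\|x(t)-x_k\|\leq l\delta\,dt$, whence $b(x(t),u_k)\geq b(x_k,u_k)-l\delta\,dt\geq 0$.

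I expect the main obstacle to be the careful handling of the control-dependent Lipschitz term: the product $L_g h(x)u_k$ is only Lipschitz in $x$ with a constant proportional to $\|u_k\|$, which is precisely why the uniform bound $\|u_k\|\leq u_{\max}$ (and the corresponding $l_{L_g h}u_{\max}$ contribution to $l$) is essential rather than cosmetic. A secondary technical point is that $S$ and $U$ must be chosen so that the trajectory stays in $S$ over the interval and the stated Lipschitz constants remain valid there; granting the regularity assumed in the lemma, the per-interval estimate then holds for every $k$, and chaining the intervals together with Lemma~\ref{lem:cbf} yields forward invariance of $\mathcal{H}$.
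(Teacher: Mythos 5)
The paper never proves this lemma---it is imported as background, explicitly labeled as summarized from \cite{breeden2021control}---so the only meaningful comparison is with that reference's argument, which your proposal correctly reconstructs: you bound the inter-sample state drift by $\delta\,dt$ under the zero-order hold, bound the sensitivity of the CBF constraint in $x$ by the Lipschitz constant $l$ (with the $l_{L_gh}u_{max}$ term correctly accounting for the held control), and conclude that the continuous-time condition of Lemma~\ref{lem:cbf} holds throughout each interval, which is precisely how the controller margin $l\delta\,dt$ arises in the source. Your argument is correct, including the technical caveat you flag that the trajectory must remain in $S$ (and $u_k \in U$) for $\delta$ and the Lipschitz constants to remain valid, which is likewise an assumption in the cited work.
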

This defines the set of controllers for the system whose states are observed discretely to stay in $\mathcal{H}$ over time, especially during $[t_k, t_{k+1})$ where $t_k = k dt$.

\subsection{Kinematic Vehicle Model}
By assuming that the tire slip ratios and slip angles are small, the vehicle's orientation and position evolve
based on its velocity, steering angle, and wheelbase. 
The kinematics of ego vehicle $\mathcal{V}_e$ as an AV can be described by the Kinematic Bicycle Model \cite{polack2017kinematic} as follows:
\begin{equation}\label{eq:kinematic bicycle model}
    \begin{cases}
      \dot X_e &= v_e \cos(\phi_e + \beta_e) \\
      \dot Y_e &= v_e \sin(\phi_e + \beta_e) \\
      \dot v_e &= a_e  \\
      \dot \phi_e &= \frac{v_e}{l} \sin \beta_e  \\
      \beta_e &= \tan^{-1} (\frac{1}{2 \tan \delta_e})
    \end{cases}       
\end{equation}
where $(X_e, Y_e)$ represents the vehicle's longitude and lateral positions.
$\phi_e$ is the vehicle heading angle and $v_e$ is the velocity of the vehicle. $\beta_e$ is the side slip angle at the center of gravity and $a_e$ the acceleration. $\delta_e$ is the front wheel steering angle.

When the vehicle travels on a highway, the environment and the vehicle dynamics may evolve rapidly, requiring quick responses from the control system
~\cite{10107482}. 
Discretizing the kinematic bicycle model allows the computation to be performed efficiently. In this case, the system in Eq.~\eqref{eq:kinematic bicycle model} can be discretized as a linear parameter-varying system using Euler's method. To simplify the discussion, we use the
mapping $\Tilde{f}$ to represent the system:
\begin{equation}
    \mathbf{s}_{e, k+1} = \Tilde{f}(\mathbf{s}_{e, k}, \mathbf{u}_{e, k})
\end{equation}
where $\mathbf{s}_{e, k} = [X_{e, k}, Y_{e, k}, v_{e, k}, \phi_{e, k}]^{\text{T}} \in \mathbb{R}^4$ and $\mathbf{u}_{e, k}=[a_{e, k}, \delta_{e, k}]^{\text{T}} \in \mathbb{R}^2$ are the state and control input of the vehicle $e$ at time step $k$, respectively. In the following sections, we use $\mathbf{P}_{e,k}$ to represent the position $(X_{e,k}, Y_{e,k})$ of the ego vehicle $\mathcal{V}_e$ at time step $k$.

\subsection{Problem Statement}
We consider the scenario where an AV operates on the road and can observe the neighboring vehicles' positions and velocities. For the observed vehicle $\mathcal{V}_i$ at time step $k$:
\begin{align}
    \mathbf{P}_{i,k} =  (X_{i,k}, Y_{i,k}) \notag \\
    \mathbf{V}_{i,k} = (v^{x}_{i,k}, v^{y}_{i,k}) \notag
\end{align}
where $\mathbf{P}_{i,k}, \mathbf{V}_{i,k}$ represent the position and velocity, respectively.
$v^{x}_{i,k}, v^{y}_{i,k}$ are the velocities of the vehicle $i$ in the $x$ and $y$ direction. Both $\mathbf{P}_{i,k}, \mathbf{V}_{i,k}$ are considered as Gaussian random variables with $\mathbf{P}_{i,k} \sim (\hat{\mathbf{P}}_{i,k}, \Sigma^{P}_i)$ and $\mathbf{V}_{i,k} \sim (\hat{\mathbf{V}}_{i,k}, \Sigma^{V}_i)$, where $\hat{\mathbf{P}}_{i,k} \in \mathbb{R}^{2}$ and $\hat{\mathbf{V}}_{i,k} \in \mathbb{R}^{2}$ are the observed location and velocity of the neighboring vehicle, respectively, and $\Sigma^{P}_i \in \mathbb{R}^{2 \times 2}$ and $\Sigma^{V}_i \in \mathbb{R}^{2 \times 2}$ are the diagonal covariance matrix. 
\begin{remark}
   Neighboring vehicles are the vehicles located within a specific distance that reflects the sensing range.
\end{remark}

Motivated by CBF-inspired risk evaluation framework in \cite{lyu2023risk},
we seek to quantify the risk when there is both position and velocity uncertainty and exploit the potential of this risk evaluation framework in safe and courteous driving.

\section{Methods}

\subsection{Control Barrier Functions-inspired Conditional Value at Risk
under Noisy Observations}
The vehicles operating on roads are assumed to be covered by an ellipse \cite{pierson2018navigating, adajania2022multi}, hence motivated by \cite{wang2017safe} the safety set between any pairwise vehicles at time step $k$ can be formulated as follows:
\begin{equation} \label{eq:safe set}
    \mathcal{H}^{P_{k}}_{e,i} = \{(\mathbf{P}_{e,k}, \mathbf{P}_{i,k}) | h_{e,i}(\mathbf{P}_{k}) \geq 0 \}
\end{equation}
\begin{equation}
    h_{e,i}(\mathbf{P}_{k}) = (X_{e,k} - X_{i,k})^2 + (\frac{Y_{e,k} - Y_{i,k}}{\tau})^2 - \mathcal{D}_s^2
    \label{eq:ellipse}
\end{equation}
where $\mathbf{P}_{k}$
is the joint position space of the ego vehicle $e$ and its neighboring vehicle $\forall i$ at the time step $k$, $\tau$ is the scaling factor and $\mathcal{D}_s$ is the safety distance.

Inspired by Lemma~\ref{lem:sample-data cbf}, we use the function\footnote{In this paper, we select the class $\kappa$ function $\alpha(h)=\gamma h$ as the same setting in \cite{zeng2021safety, lyu2023risk}, where $\gamma \in \mathbb{R} \ge 0$.} $H_{e,i}(\mathbf{P}_{k}, \dot{\mathbf{P}}_k)$ to represent the severity of the potential pairwise collision, which is defined as:
{\small
\begin{align}\label{eq:H_e,i}
    H_{e, i}(\mathbf{P}_{k}, \dot{\mathbf{P}}_k) = -\dot{h}_{e,i}(\mathbf{P}_{k}, \dot{\mathbf{P}}_k) - \gamma h_{e,i}(\mathbf{P}_{k})+ l_H\delta_H dt
\end{align}}
where $l_{H} = l_{L_fh_{e,i}} + l_{L_gh_{e,i}}u_{max} + l_\alpha(h_{e,i})$ with $\delta_{H} = \sup_{\mathbf{P}_{e,k} \in \mathcal{H}^{P_k}_{e,i}} |\dot{\mathbf{P}}_{e,k}|$ and $dt$ is the duration of one time step (the dependence of $h_{e,i}$ is omitted for notation simplicity.). 

With the Gaussian distributed observed positions and velocities of the neighboring vehicles, $H_{e,i}(\mathbf{P}_{k}, \dot{\mathbf{P}}_k)$ is distributed as a noncentral chi-square distribution. Using the Delta method \cite{ver2012invented}, we can approximate the distribution of $H_{e,i}(\mathbf{P}_{k}, \dot{\mathbf{P}}_k)$ as a Gaussian distribution where the mean and variance are estimated with the ﬁrst-order Taylor series expansion, i.e., $H_{e,i}(\mathbf{P}_{k}, \dot{\mathbf{P}}_k) \sim \mathcal{N}(\boldsymbol{\mu}_{H_{e,i}}, \Sigma_{H_{e,i}})$ with:
\begin{align}
\boldsymbol{\mu}_{H_{e,i}} &= -\dot{h}_{e,i}(\hat{\mathbf{P}}_{k}, \dot{\hat{\mathbf{P}}}_k) - \gamma h_{e,i}(\hat{\mathbf{P}}_{k})+ l_{H}\delta_{H} dt
\\
\Sigma_{H_{e,i}} &= \nabla  H_{e, i}(\boldsymbol{\mu}_{H_{e,i}})^{\text{T}} \Sigma_{e,i} \nabla  H_{e, i}(\boldsymbol{\mu}_{H_{e,i}})
\end{align}
where $\Sigma_{e,i} \in \mathbb{R}^{8 \times 8}$ is the covariance of the $[\mathbf{P}_{e,k}, \mathbf{P}_{i,k}, \dot{\mathbf{P}}_{e,k}, \dot{\mathbf{P}}_{i,k}]$.

We employ conditional value-at-risk\footnote{With a user-defined confidence level $\alpha$, $\mathrm{CVaR}$ is the expected cost in the worst $\alpha$-percentile of the distribution function $H_{e,i}(\mathbf{P}_{k}, \dot{\mathbf{P}}_k)$.} (CVaR) \cite{hsu2023interpretable} to quantify the ego vehicle's perceived risk from potential pairwise collision and observation uncertainty:
\begin{align}\label{eq:pairwise cvar}
\mathrm{CVaR}_{\alpha}(H_{e,i}):&=\frac{\int_{1-\alpha}^{1}\mathrm{V}\mathrm{a}\mathrm{R}_{1-\lambda}(H_{e, i})d\lambda}{\alpha} \notag \\
&= E(H_{e,i}|H_{e, i} \geq \mathrm{V}\mathrm{a}\mathrm{R}_{\alpha}(H_{e, i}))
\end{align}
where $\mathrm{V}\mathrm{a}\mathrm{R}_{\alpha}(H_{e, i}) := \text{inf}\{H \in \mathbb{R} | \mathrm{Pr}(H_{e, i} \geq H) \leq \alpha\}$, and $\alpha$ is a user-defined confidence level. $\mathrm{Pr}(\cdot)$ indicates the probability of an event. To simplify the notation, we use $H_{e,i}$ to represent $H_{e,i}(\mathbf{P}_{k}, \dot{\mathbf{P}}_k)$ by omitting the dependence on $\mathbf{P}_{k}$ and $\dot{\mathbf{P}}_k$.

\begin{theorem}\label{theorem:safety}
Consider the safe set $\mathcal{H}^{P_k}_{e,i}$ defined in Eq.~\eqref{eq:safe set}. Let $l_{H} = l_{L_fh_{e,i}} + l_{L_gh_{e,i}}u_{max} + l_\alpha(h_{e,i})$, $\delta_{H} = \sup_{\mathbf{P}_{e,k} \in \mathcal{H}^{P_k}_{e,i}} |\dot{\mathbf{P}}_{e,k}|$, and with a user-defined confidence level $\alpha$. If the ego vehicle is in a safe state at the initial time step, then the constraint defined in Eq.~\eqref{eq:safety constraint} at time step $k$ will guarantee the pairwise vehicle-to-vehicle safety with at least $1-\alpha$ probability during $[t_k, t_{k+1})$.
\begin{equation} \label{eq:safety constraint}
\mathrm{CVaR}_{\alpha}(H_{e, i}) \leq 0.
\end{equation}
\end{theorem}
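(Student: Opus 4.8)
The plan is to reduce the probabilistic safety claim to the deterministic forward-invariance guarantee of Lemma~\ref{lem:sample-data cbf}, by extracting a probability bound from the $\mathrm{CVaR}$ constraint. First I would observe that, with the choices $\alpha(h)=\gamma h$, $l=l_H$, and $\delta=\delta_H$, the event $H_{e,i}\le 0$ is algebraically identical to the Sampled-data CBF admissibility condition of Eq.~\eqref{eq:sampledata_cbc_lemma}: rearranging Eq.~\eqref{eq:H_e,i} and using $\dot{h}_{e,i}=L_fh_{e,i}+L_gh_{e,i}u_k$ along trajectories gives $H_{e,i}\le 0 \iff \dot{h}_{e,i} \ge -\gamma h_{e,i} + l_H\delta_H dt$, i.e. the realized control input lies in the admissible set $S_\textrm{cbf}$ of Eq.~\eqref{eq:sampledata_cbc_lemma}. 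Hence, by Lemma~\ref{lem:sample-data cbf}, whenever the realized $H_{e,i}\le 0$ and the ego vehicle starts inside $\mathcal{H}^{P_k}_{e,i}$, the safe set is rendered forward invariant over $[t_k,t_{k+1})$. The whole statement therefore follows once I show that $\mathrm{CVaR}_{\alpha}(H_{e,i})\le 0$ forces $\mathrm{Pr}(H_{e,i}\le 0)\ge 1-\alpha$.

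For this I would invoke the standard ordering between the two risk measures. Because $\mathrm{CVaR}_{\alpha}(H_{e,i})$ is the conditional mean of $H_{e,i}$ over the worst $\alpha$-tail $\{H_{e,i}\ge \mathrm{VaR}_{\alpha}(H_{e,i})\}$, every value contributing to this average is at least $\mathrm{VaR}_{\alpha}(H_{e,i})$, so $\mathrm{CVaR}_{\alpha}(H_{e,i}) \ge \mathrm{VaR}_{\alpha}(H_{e,i})$. Consequently the hypothesis $\mathrm{CVaR}_{\alpha}(H_{e,i})\le 0$ immediately yields $\mathrm{VaR}_{\alpha}(H_{e,i})\le 0$.

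It remains to convert the $\mathrm{VaR}$ bound into the desired probability. Using the definition $\mathrm{VaR}_{\alpha}(H_{e,i}) = \inf\{H\mid \mathrm{Pr}(H_{e,i}\ge H)\le\alpha\}$ together with the monotonicity of the tail probability $H\mapsto \mathrm{Pr}(H_{e,i}\ge H)$, the inequality $\mathrm{VaR}_{\alpha}(H_{e,i})\le 0$ gives $\mathrm{Pr}(H_{e,i}\ge 0)\le\alpha$, hence $\mathrm{Pr}(H_{e,i}\le 0)\ge 1-\alpha$. Combining this with the first step, the deterministic Sampled-data CBF condition holds with probability at least $1-\alpha$, and on that event Lemma~\ref{lem:sample-data cbf} certifies that the pair $(\mathcal{V}_e,\mathcal{V}_i)$ stays in $\mathcal{H}^{P_k}_{e,i}$, i.e. remains collision-free throughout $[t_k,t_{k+1})$, which is exactly the claim.

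The step requiring the most care is the passage from $\mathrm{VaR}_{\alpha}(H_{e,i})\le 0$ to $\mathrm{Pr}(H_{e,i}\ge 0)\le\alpha$: strictly, the infimum in the definition of $\mathrm{VaR}$ must be attained at the $\alpha$-quantile, so that $\mathrm{Pr}(H_{e,i}\ge \mathrm{VaR}_{\alpha})=\alpha$ and monotonicity can then be applied at the threshold $0$. This is legitimate here precisely because the Delta-method approximation renders $H_{e,i}$ Gaussian, whose cumulative distribution function is continuous and strictly increasing; the tail probability is therefore continuous, the quantile is well defined, and the inequality transfers cleanly to the threshold $0$. This is the only place where the Gaussian (Delta-method) modeling assumption on $H_{e,i}$ is actually invoked, while the link to physical collision avoidance rests entirely on the forward invariance supplied by Lemma~\ref{lem:sample-data cbf}.
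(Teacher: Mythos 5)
Your proof is correct and follows essentially the same route as the paper's: both derive $\mathrm{VaR}_{\alpha}(H_{e,i}) \le 0$ from the CVaR constraint (the paper by contradiction, you via the standard ordering $\mathrm{CVaR}_{\alpha}(H_{e,i}) \ge \mathrm{VaR}_{\alpha}(H_{e,i})$, which is the same fact), then convert this to $\mathrm{Pr}(H_{e,i} \le 0) \ge 1-\alpha$ and invoke Lemma~\ref{lem:sample-data cbf}. Your two added touches---spelling out that $H_{e,i}\le 0$ is algebraically the sampled-data CBF condition, and flagging that the step from $\mathrm{VaR}_{\alpha}\le 0$ to $\mathrm{Pr}(H_{e,i}\ge 0)\le\alpha$ needs continuity of the (Gaussian-approximated) distribution to rule out an atom at zero---are refinements of details the paper leaves implicit, not a different argument.
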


\begin{proof}
    To simplify the notation, let us denote $L = \{H_{e,i} \geq \mathrm{VaR}_{\alpha}(H_{e,i})\}$. 
    Since $\mathrm{CVaR}_{\alpha} \leq 0$ suggests that the mean of the worst $\alpha$ portion of the outcomes for $H_{e,i}$ is negative, we can get $\mathrm{VaR}_{\alpha}(H_{e,i}) \leq 0$. Assume for contradiction that $\mathrm{VaR}_{\alpha} \geq 0$. Then according to the definition, every element in the set $L$ is larger than $0$, which contradicts the assumption that $\mathrm{CVaR}_{\alpha}(H_{e,i}) = E(H_{e,i}|L) \leq 0$.

    Given $\mathrm{VaR}_{\alpha}(H_{e,i}) \leq 0$ and from the definition of $\mathrm{VaR}$, we have $\mathrm{Pr}(H_{e,i} \geq \mathrm{VaR}_{\alpha}(H_{e,i})) \leq \alpha$, we can get that $\mathrm{Pr}(H_{e,i} \geq 0) \leq \alpha$. Then it is straightforward that:
    \begin{equation}
        \mathrm{Pr}(H_{e,i} \leq 0)=1-\mathrm{Pr}(H_{e,i}\geq 0) \geq 1- \alpha. 
    \end{equation}
    According to the definition of $H_{e,i}$ in Eq.~\eqref{eq:H_e,i} and using Lemma~\ref{lem:sample-data cbf}, we can conclude that the vehicle-to-vehicle safety can be guaranteed with at least $1-\alpha$ probability. 
\end{proof}

\begin{remark} \label{remark:time_horizion}
Note that Eq.~\eqref{eq:pairwise cvar} defines the probabilistic safety for a single time step. When extending the consideration to cover the high-probability safety throughout the entire trajectory, the step-wise probability for the vehicle-to-vehicle safety is lower bounded by $(1-\alpha)^{k_t}$, where $k_t$ is the time steps of the entire trajectory. Readers are referred to \cite{zhu2019chance,luo2020multi} for more detailed discussions.
\end{remark}

Based on the pairwise risk evaluation defined in Eq.~\eqref{eq:pairwise cvar}, we further define the resultant perceived risk $R_{e,k}$ of the ego vehicle $\mathcal{V}_e$ on the highway at time step $k$ as:
\begin{equation}\label{eq:risk_evaluation}
    R_{e,k} = \max  \{\mathrm{CVaR}_{\alpha}(H_{e,i}) , \forall i \neq e \}.
\end{equation}

\begin{remark}
By assuming optimal behavior from both human drivers and automated vehicles, it is reasonable to conclude that the vehicles in the environment are well-spaced \cite{huang2021driving}. Consequently, we utilize the maximum risk derived from all pairwise risks to represent the perceived risk from neighboring vehicles. 
This approach is reasonable by the fact that in real-world scenarios, human drivers tend to focus more on the vehicles that pose the greatest perceived risk. 
A similar strategy using maximum risk to represent the risk perceived by the ego vehicle has been employed in \cite{hsu2023interpretable}.
\end{remark}

\subsection{Ego-perceived Risk Map for Highway Decision Making}

Using the CBF-inspired risk evaluation framework, we can construct a comprehensive risk map that enables deeper insights into how perceived risk is aggregated, enhancing the decision-making processes for the ego vehicle.

We employ a three-lane highway as a running example with two vehicles running on the road where the red vehicle is the ego vehicle $\mathcal{V}_e$ and the blue one $\mathcal{V}_b$ is the observed neighboring vehicle. To construct the risk map depicted in Fig.~\ref{fig:highway-riskmap}, we counterfactually place the ego vehicle at various points across the map and evaluate the perceived risk at each point using Eq.~\eqref{eq:risk_evaluation}. This evaluated risk is then attributed to the corresponding point on the map and visualized using a gradient color scheme to explicitly depict risk variations.

\begin{figure}
\centering  
\begin{subfigure}[b]{0.46\textwidth}
\centering
  \includegraphics[width=\linewidth]{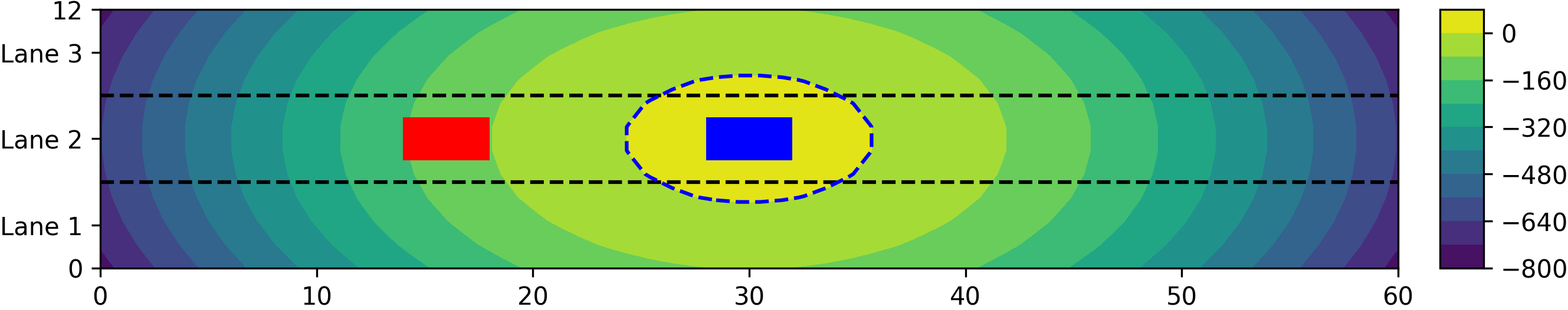}
 \caption{Velocities of $\mathcal{V}_e$ and $\mathcal{V}_b$ are $(15,0) $and $(15,0)$, respectively.}
  \label{fig:suba}
\end{subfigure} \\
\begin{subfigure}[b]{0.46\textwidth}
\centering
  \includegraphics[width=\linewidth]{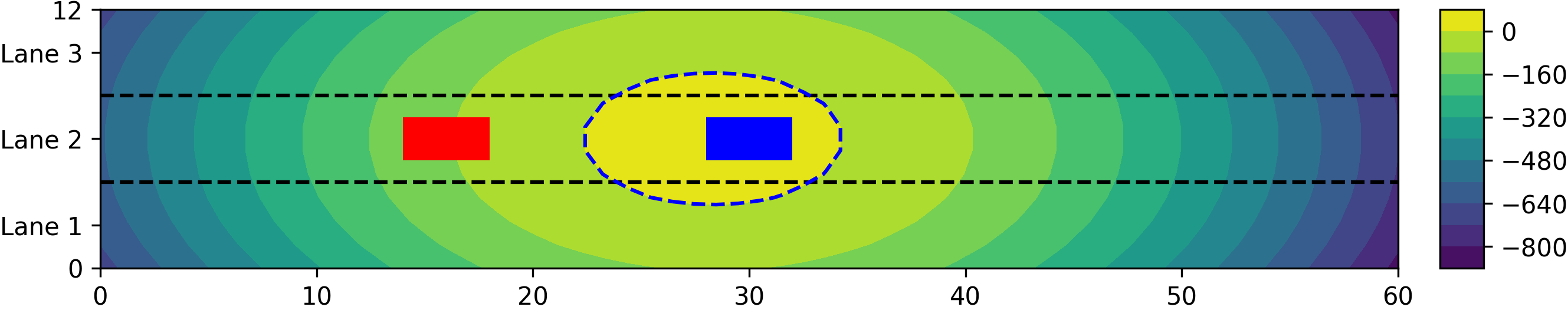}
 \caption{Velocities of $\mathcal{V}_e$ and $\mathcal{V}_b$ are $(15,0) $and $(10,0)$, respectively.}
  \label{fig:subb}
   \end{subfigure} \\
\begin{subfigure}[b]{0.46\textwidth}
\centering
  \includegraphics[width=\linewidth]{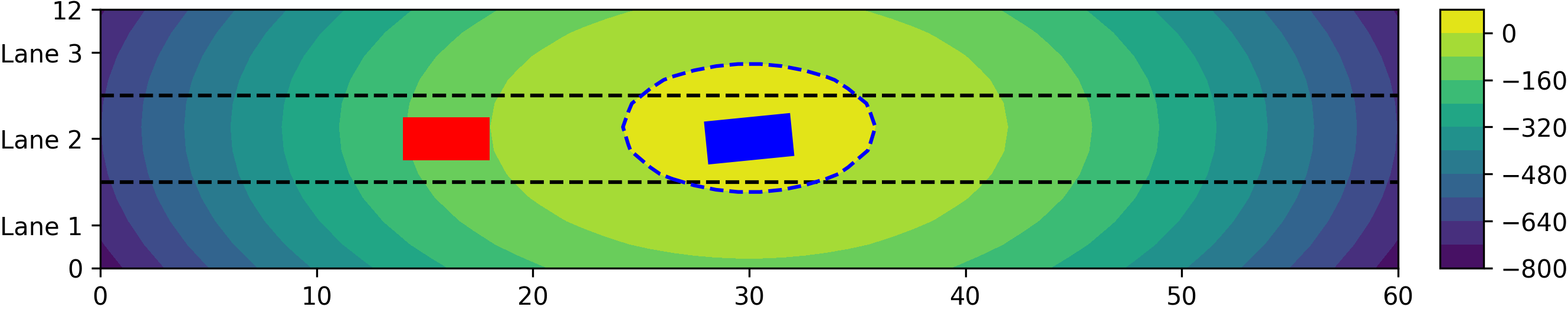}
 \caption{Velocities of $\mathcal{V}_e$ and $\mathcal{V}_b$ are $(15,0) $and $(15,1.5)$, respectively.}
  \label{fig:subc}
\end{subfigure}
\caption{\small Running example for visualizing the CBF-inspired risk map, demonstrating how risk is aggregated when considering various velocity configurations' influence on the ego-perceived risk map using Eq.~\eqref{eq:risk_evaluation}. The red vehicle represents the ego vehicle, and the blue one is the observed neighboring vehicle. The area within the dashed blue contour indicates the region where the evaluated risk is larger than $0$.}
\label{fig:highway-riskmap}
\vspace{-0.5cm}
\end{figure}

Using the risk maps depicted in Fig.~\ref{fig:highway-riskmap}, we illustrate how risk is aggregated based on various velocity configurations and their influence on the ego-perceived risk map. In all three subfigures, the positions of the ego vehicle $\mathcal{V}_e$ and the neighboring vehicle $\mathcal{V}_b$
are the same.
This consistent positioning allows us to isolate and analyze the effects of velocity changes on the perceived risk without the confounding factor of varying spatial relationships.
The state and velocity noise covariance are both set as diag$\{0.1, 0.1\}$ and the confidence level is set as $\alpha = 0.1$. The dashed blue contour in every subfigure delineates the area where the evaluated risk exceeds $0$.
In Fig.~\ref{fig:suba}, both $\mathcal{V}_e$ and $\mathcal{V}_b$ are traveling at a velocity of $(15,0)$, and it is obvious that the closer the ego vehicle is to the neighboring vehicle, the larger the perceived risk. Fig.~\ref{fig:subb} illustrates a shift in risk perception when $\mathcal{V}_b$'s velocity changes to $(10, 0)$, showing a noticeable bias. This is reasonable given that the ego vehicle's velocity exceeds that of the leading vehicle. 

Furthermore, we adjust $\mathcal{V}_b$'s velocity to $(15, 1.5)$, which significantly reduces the safe region in the third lane as shown in Fig.~\ref{fig:subc}. If $\mathcal{V}_e$ intends to overtake $\mathcal{V}_b$, although there appears to be sufficient space in Lane three, the risk map suggests that overtaking in Lane one is a preferable strategy. This decision not only offers a safer alternative but also exhibits courteous behavior towards $\mathcal{V}_b$.

\begin{remark}
This discussion focuses solely on the impact of the observed vehicle's velocity on the risk map, serving to illustrate the effectiveness of the proposed CBF-inspired risk evaluation method. In the context of highway scenarios, velocity is the predominant factor. However, other elements e.g. the safety radius, 
parameter $\gamma$, or
noise distribution
also significantly influence the ego-perceived risk map.
\end{remark}

Based on these observations, it is reasonable to conclude that: 1) the risk map based on the risk evaluation method is applicable and illustrates varying performance under different positions and motions, and 2) it shows the potential for aiding AV's decision-making for safe and courteous driving.

\subsection{Courteous Model Predictive Control}
Inspired by the analysis of the CBF-inspired risk map and Theorem \ref{theorem:safety},  
to realize safe and courteous driving performance, 
we integrate the perceived risk by ego vehicle into the MPC framework. 

For a MPC framework, a general task-related cost function $J(\mathbf{s}_{e, k})$ is defined as below:
\begin{align}\label{eq:MPC cost function}
    J(\mathbf{s}_{e, k})&= \sum_{t=0}^{N-1}\bigl(l_\text{S}(\mathbf{s}_{e,k+t|k}, \mathbf{u}_{e,k+t|k}) \bigr) \notag \\
    &+l_{\text{T}}(\mathbf{s}_{e,k+N|k}, \mathbf{u}_{e,k+N|k})
\end{align}
where $N$ is the prediction horizon, $l_\text{S}(\mathbf{s}_{e,k+t|k}, \mathbf{u}_{e,k+t|k})$ and $l_{\text{T}}(\mathbf{s}_{e,k+N|k}, \mathbf{u}_{e,k+N|k})$ represent the stage cost and terminal cost, respectively.
$\mathbf{s}_{e,k+t|k}$ denotes the state vector at time step $t+k$ predicted at time step $k$ obtained by starting from the current state $\mathbf{s}_{e,k}$ by applying the input sequence $\mathbf{u}_{k:k+N-1|k}$. To realize safe and courteous driving, we further integrate the risk perceived by the ego vehicle to the cost function and constraints to synthesize
the MPC controller, which is defined as:
\begin{subequations}\begin{align}
    & \min_{\mathbf{u}_{k:k+N-1|k}}  J(\mathbf{s}_{e, k}) + \sum_{t=0}^{N}\bigl(P_{\text{S}} R_{e,k+t|k} \bigr) \tag{16} \label{eq:courteous MPC}  \\
     s.t. \quad &\mathbf{s}_{e,k+t+1|k} = \Tilde{f}(\mathbf{s}_{e,k+t|k}, \mathbf{u}_{e,k+t|k}), t=0,\cdots, N-1 \notag \\
     & \mathbf{s}_{e,k|k} = \mathbf{s}_{e,k} \notag \\
    & R_{e,k+t|k} \leq 0, t=0,\cdots, N-1 \label{eq:riskconstraint}  \\
    & \mathbf{s}_{min} \leq \mathbf{s}_{e,k+t|k} \leq \mathbf{s}_{max}, t=0,\cdots, N-1  \notag \\
    & \mathbf{u}_{min} \leq \mathbf{u}_{e,k+t|k} \leq \mathbf{u}_{max}, t=0,\cdots, N-1 \notag
\end{align}\end{subequations}
where $P_{\text{S}}$ is a user-defined weight parameter and $R_{e,k+t|k}$ is the predicted risk at time step $k$. $\mathbf{s}_{min}$ and $\mathbf{s}_{max}$ represent the lower and upper bounds of the state constraints. $\mathbf{u}_{min}$ and $\mathbf{u}_{max}$ represent the lower and upper bounds of the control inputs. By the constraint of Eq.~\eqref{eq:riskconstraint}, we can guarantee safety with at least $1-\alpha$ probability. By integrating the risk evaluation into the cost function, the ego vehicle may make decisions to reduce the perceived risk to yield more space for the neighboring vehicles.

\section{Experiments and Discussion}
To validate the effectiveness of the proposed Courteous MPC,
the parameter analysis experiment is conducted first to discuss the influence of weight parameter $P_{\mathrm{S}}$ on the ego vehicle courtesy performance.
Then, the simulation experiments on interaction with both IDM vehicles and the NGSIM US101 dataset are conducted to validate the effectiveness of our proposed Courteous MPC. We employ the do-mpc library \cite{fiedler2023mpc} to formulate the MPC controller and utilize IPOPT \cite{wachter2006implementation} as the nonlinear solver.

\subsection{Parameter Analysis}\label{sec: 4.1}
To exploit the influence of the weight parameter $P_{\mathrm{S}}$ on the ego vehicle performance, we conducted a cruise driving task with varying $P_{\mathrm{S}}$ in the highway-env~\cite{highway-env} simulation environment, where the behavior of surrounding vehicles is controlled by IDM \cite{treiber2000congested}. The state and velocity noise covariances are both set as diag$\{0.1,0.1\}$ with the confidence level set as $\alpha = 0.1$. The target speed of the ego vehicle is $15\; m/s$.
As depicted in Fig.~\ref{fig:influence of P}, with the increase of $P_{\mathrm{S}}$ in Eq.~\ref{eq:courteous MPC}, the minimum distance between the ego vehicle and neighboring vehicles also increases. This indicates that the penalty of the risk term allows more space around the ego vehicle, enhancing comfort for surrounding vehicles.
Additionally, we can observe that even with a lower
$P_{\mathrm{S}}$, a reasonable space is still maintained to ensure safe driving.

\begin{figure}[t]
    \centering
    \includegraphics[width=2.5 in]{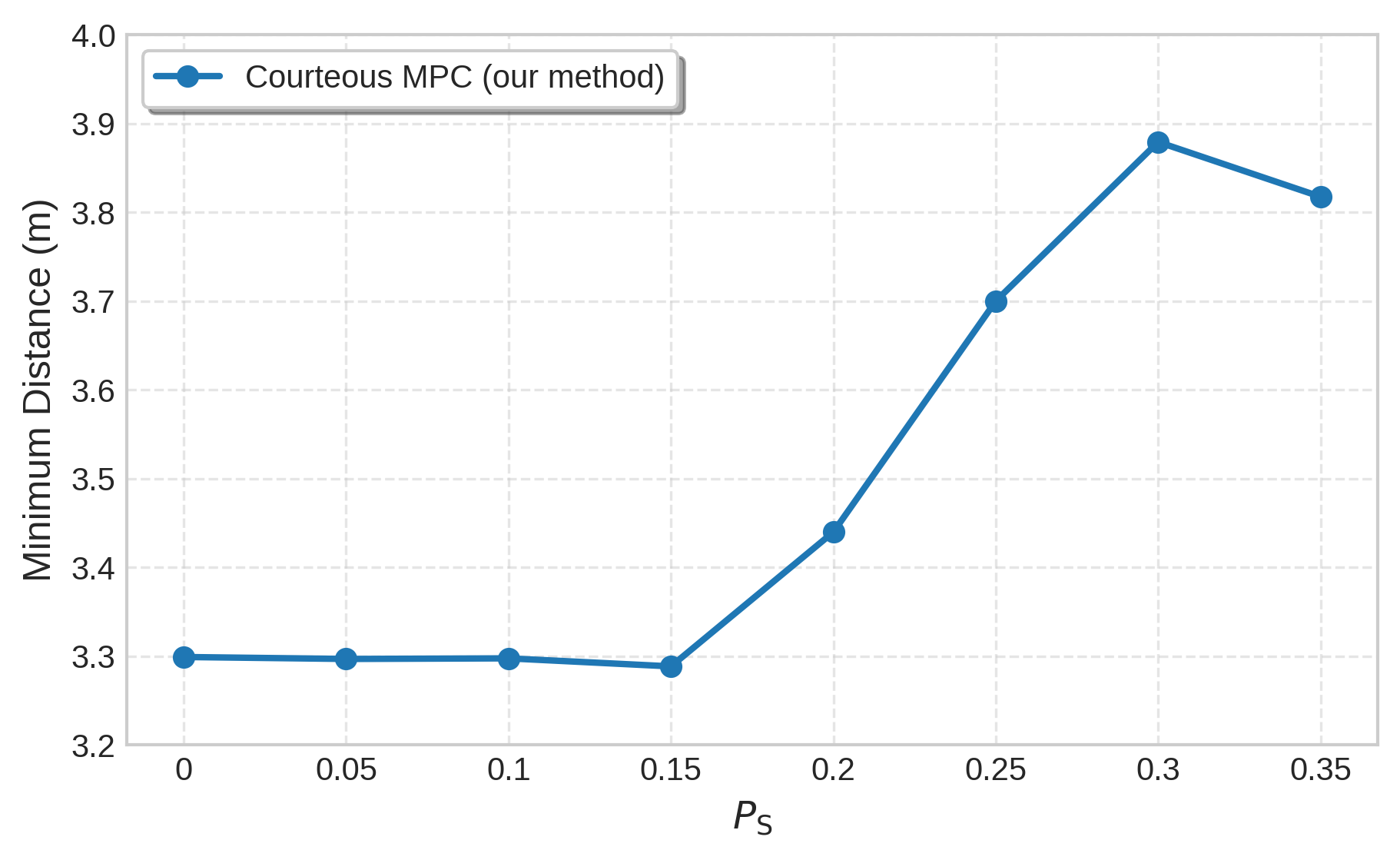}
    \caption{Influence of $P_{\mathrm{S}}$ on the minimum distance between the ego vehicle and neighboring IDM vehicles.}
    \label{fig:influence of P}
    \vspace{-0.55 cm}
\end{figure}

\begin{figure*}[!ht]
\centering
\begin{subfigure}[b]{0.45\textwidth}
  \includegraphics[width=\linewidth]{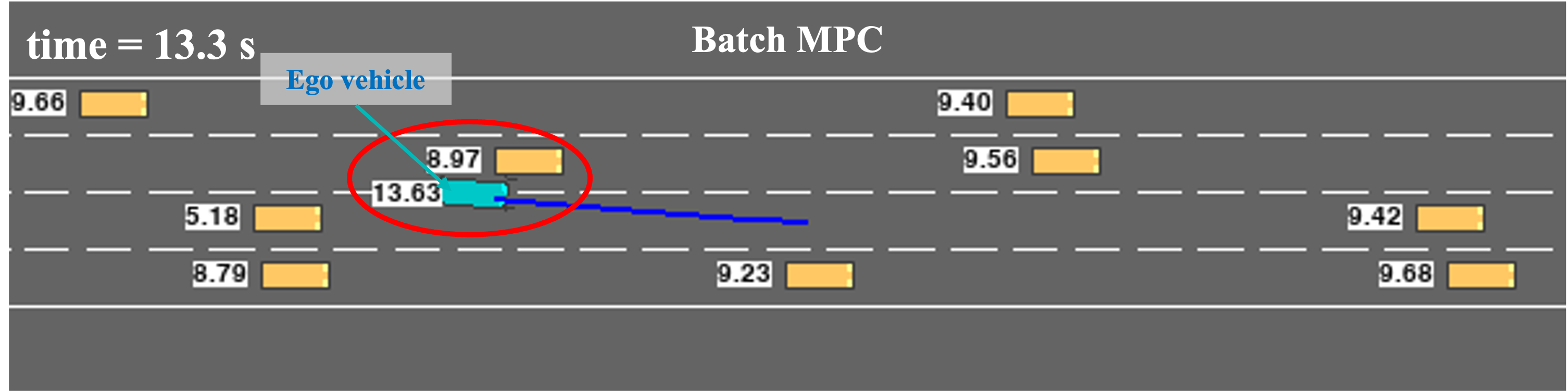}
 \caption{}
  \label{fig:batchmpc1}
  \end{subfigure}
\begin{subfigure}[b]{0.45\textwidth}
  \includegraphics[width=\linewidth]{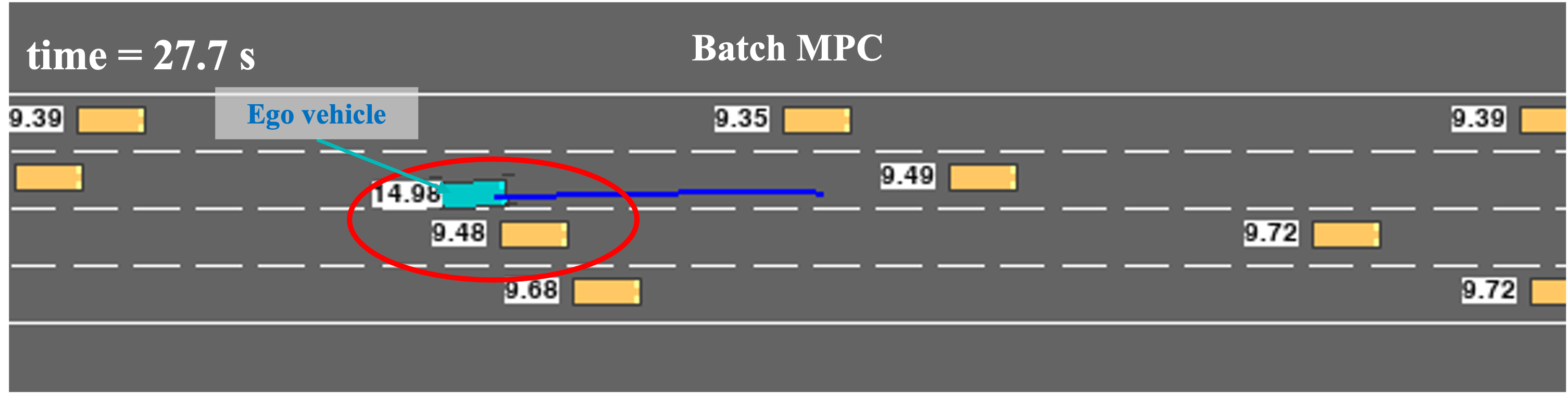}
 \caption{}
  \label{fig:batchmpc2}
\end{subfigure}
\begin{subfigure}[b]{0.45\textwidth}
  \includegraphics[width=\linewidth]{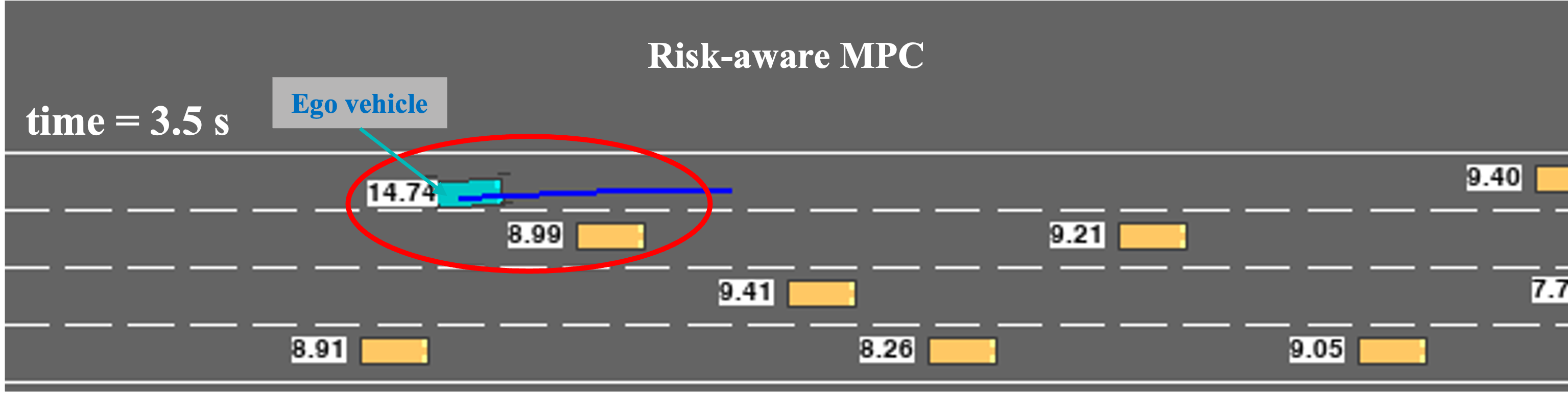}
 \caption{}
  \label{fig:risk-aware mpc1}
   \end{subfigure} 
\begin{subfigure}[b]{0.45\textwidth}
  \includegraphics[width=\linewidth]{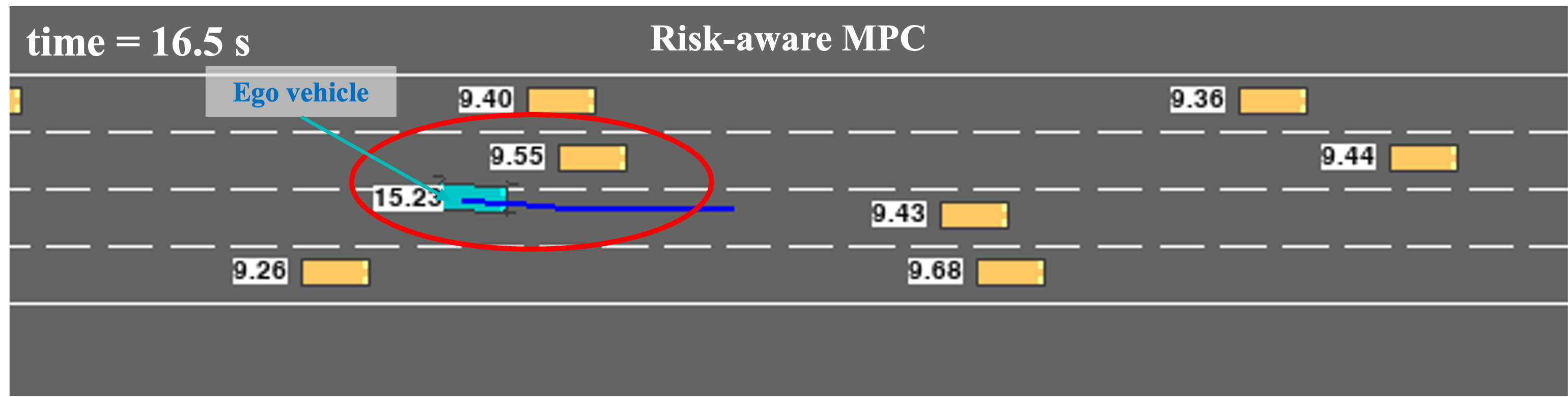}
 \caption{}
  \label{fig:risk-aware mpc2}
\end{subfigure}
\begin{subfigure}[b]{0.45\textwidth}
  \includegraphics[width=\linewidth]{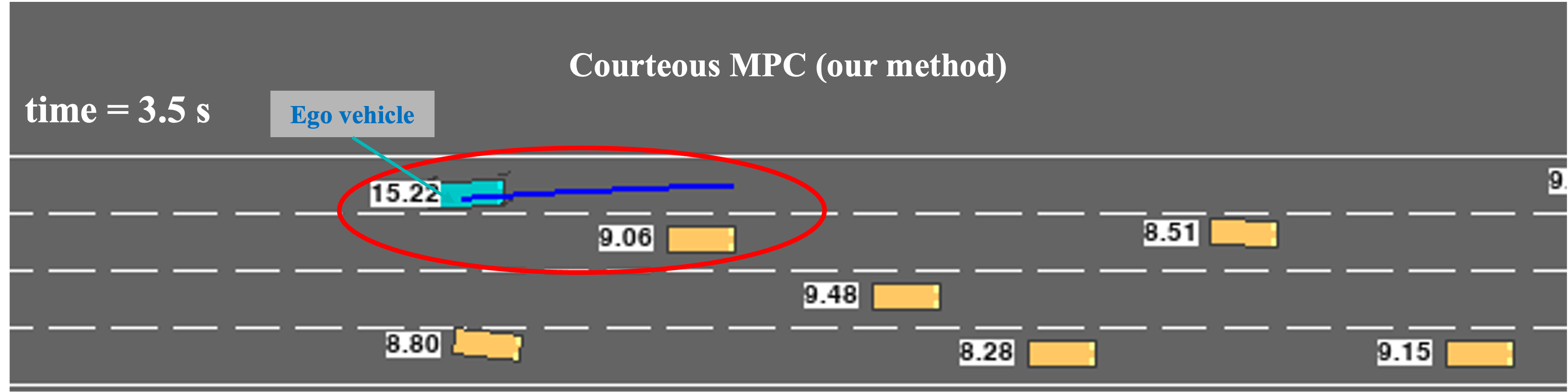}
 \caption{}
  \label{fig:courteous mpc1}
   \end{subfigure} 
\begin{subfigure}[b]{0.45\textwidth}
  \includegraphics[width=\linewidth]{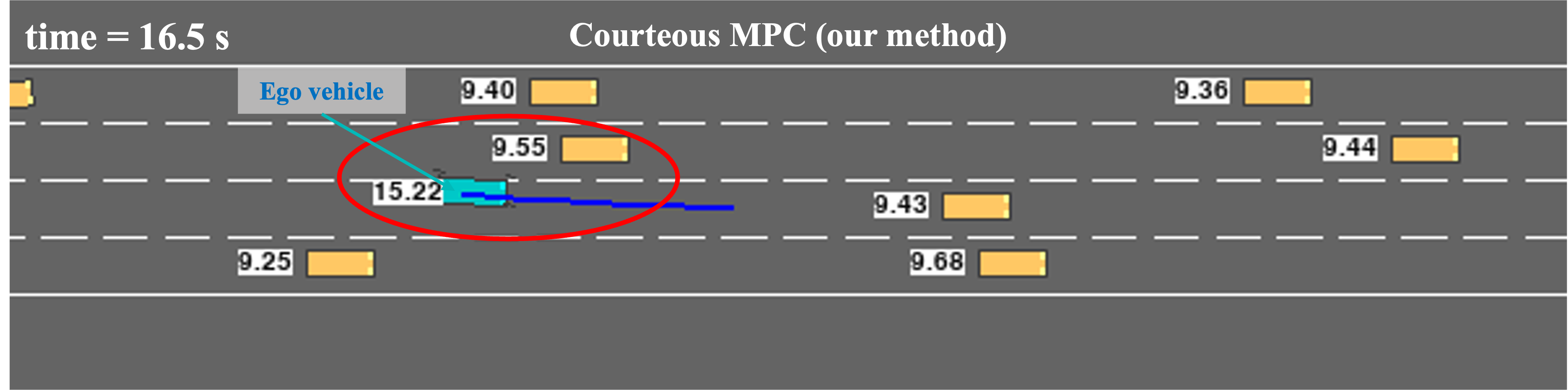}
 \caption{}
  \label{fig:courteous mpc2}
\end{subfigure}
  \caption{\small Performance comparison of the simulation experiments in the highway-env environment, depicting two different examples of overtaking behavior for each of three methods. Fig.~\ref{fig:batchmpc1} and Fig.~\ref{fig:batchmpc2} depict the results for Batch MPC, Fig.~\ref{fig:risk-aware mpc1} and Fig.~\ref{fig:risk-aware mpc2} for Risk-aware MPC, and Fig.~\ref{fig:courteous mpc1} and Fig.~\ref{fig:courteous mpc2} for Courteous MPC. The red ellipse in each figure highlights the ego vehicle and its overtaking target. And the value shown behind every vehicle is its speed.}
  \label{fig:performance_comparsion}
  \vspace{-0.6 cm}
\end{figure*}

\subsection{Simulation Experiment on IDM Vehicles}
To further validate the performance of the proposed Courteous MPC, three methods including Courteous MPC, Risk-aware MPC and Batch MPC \cite{adajania2022multi} are conducted for the cruise control task
in the highway-env simulation environment \cite{highway-env}.
For clarity, the controller is designated as "Risk-aware MPC" when the risk term is removed from the cost function.
For Courteous MPC, we use the same parameter setting as Section.~\ref{sec: 4.1} and set $P_{\mathrm{S}}=0.25$.

\begin{figure}
    \centering
    \includegraphics[width=2.4 in]{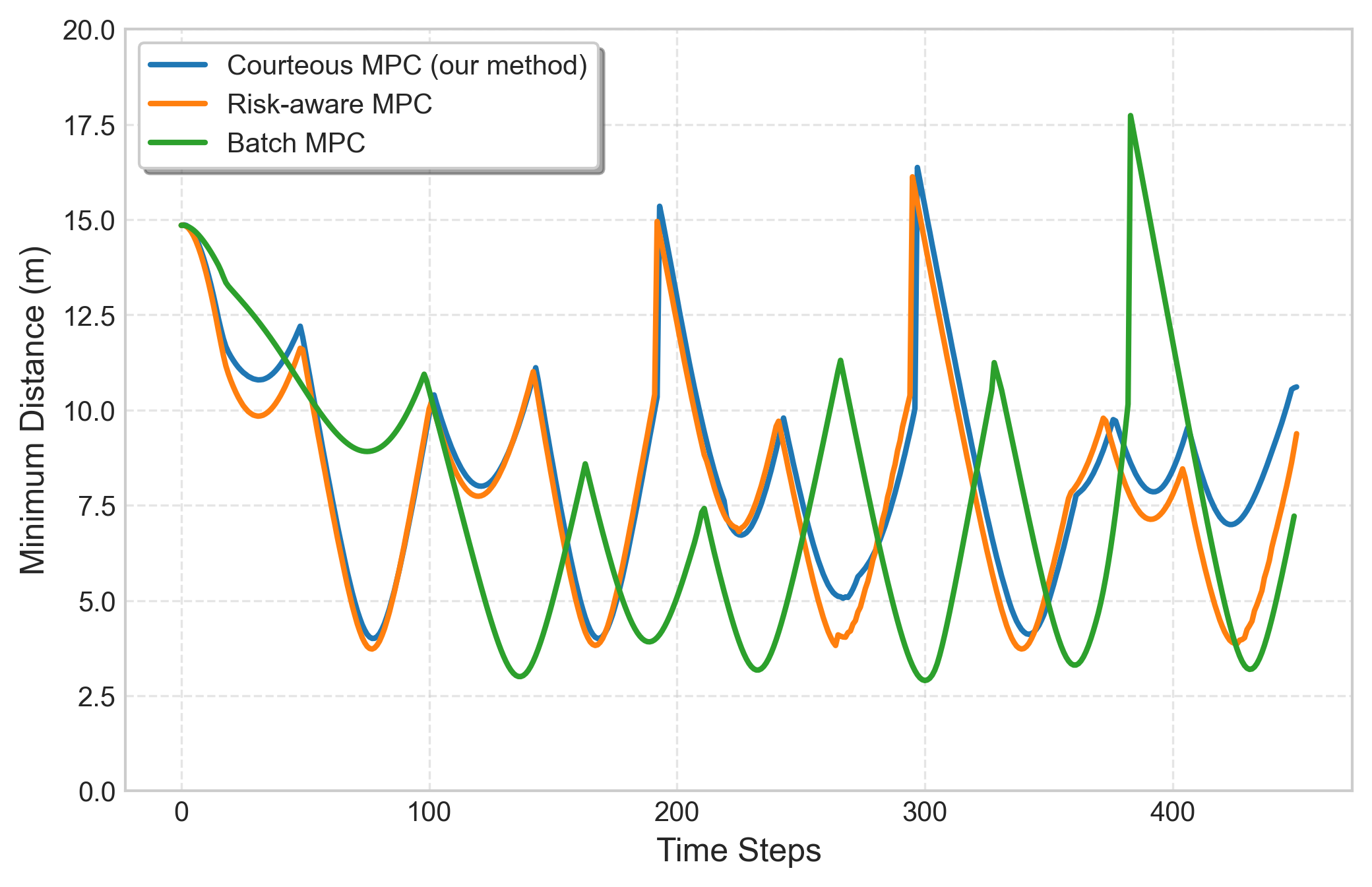}
    \caption{\small Minimum distance between the ego vehicle and the neighboring IDM vehicles over time.}
    \label{fig:min_distance}
    \vspace{-0.55 cm}
\end{figure}

As depicted in Fig.~\ref{fig:performance_comparsion}, we provide snapshots demonstrating two examples of overtaking behavior for each method. Both Risk-aware MPC and Courteous MPC initiate overtaking maneuvers earlier than Batch MPC, which supports the efficacy of our CBF-inspired risk evaluation framework in preventing collisions proactively.
Compared to Risk-aware MPC, Courteous MPC initiates overtaking maneuvers even earlier. This suggests that integrating the risk term into the cost function allows more space around neighboring vehicles, thereby enhancing comfort and safe margins. 
Additionally, Fig.~\ref{fig:min_distance} shows the minimum distance maintained by the ego vehicle from IDM-controlled vehicles over time.
This analysis also reveals that 
Courteous MPC consistently allows greater space around the ego vehicle, enhancing comfort for surrounding vehicles\footnote{The simulation details can be found at \url{https://youtu.be/s5aeT6MPhio}
}.

To prove the effectiveness of Courteous MPC, we compare the cruise task performance of the three methods as shown in Table \ref{tab:task performance}. 
Performance metrics include the average speed of the ego vehicle and the total travel
distance in the longitudinal direction. 
The data reveals that Courteous MPC significantly outperforms the other two methods on both metrics. Notably, the relatively lower accuracy of Batch MPC aligns with the findings reported in \cite{zheng2024barrier}. 
These improvements suggest that providing additional space does not detrimentally affect overall task performance. Instead, it likely enables the ego vehicle to seek more open areas, allowing it to achieve higher speeds rather than confining itself
within narrow spaces.

\begin{table}[ht]
\vspace{-0.2 cm}
\centering
\caption{\small Task Performance Comparison}
\label{tab:task performance}
\begin{tabular}{|c|c|c|}
\hline
 & Average speed ($m/s$) & \thead{Longitude distance ($m$) \\within 30 seconds }\\ \hline
Courteous MPC & 14.96 & 446.09 \\ \hline
Risk-aware MPC & 14.89 & 441.89 \\ \hline
Batch MPC & 13.80 & 414.60 \\ \hline
\end{tabular}
\vspace{-0.2 cm}
\end{table}

To further validate the robustness of the Courteous MPC, we conduct $10$ trials for the three methods and plot the average minimum distances with error bars, as shown in Figure~\ref{fig:average_min_distance}. For every trial, the positions of the vehicles are randomly initiated. The results corroborate our earlier findings, indicating that Courteous MPC consistently allows for greater spacing around surrounding vehicles, thereby facilitating more courteous interactions on the road.

\subsection{Simulation Experiment on NGSIM Dataset}
To further verify the performance of the Courteous MPC on real-world data set, we conduct the simulation experiment in the 
NGSIMENV simulation environment \cite{huang2021driving} with Next Generation Simulation (NGSIM) US101 dataset \cite{alexiadis2004next}, where a human-driven vehicle is replaced by the intelligent vehicle empowered by the Courteous MPC. As we can observe from Fig.~\ref{fig:performance_ngsim}, the Courteous MPC can successfully 
travel through
\textit{heterogeneous} vehicles, which is realized by setting different scaling factors $\tau$ based on the vehicles' sizes in Eq.~\eqref{eq:ellipse}.
Note that in Fig.~\ref{fig:courteousmpc_ngsim2}, the ego vehicle is running on the dashed white lane due to the neighboring two vehicles are running too close to the lane. The best way for the ego vehicle to show courtesy to both two vehicles is to maintain its position on the dashed white lane
\footnote{The simulation details can be found at \url{https://youtu.be/9Cekw5xafWw}
}.

\begin{figure}
    \centering
    \includegraphics[width=2.7 in]{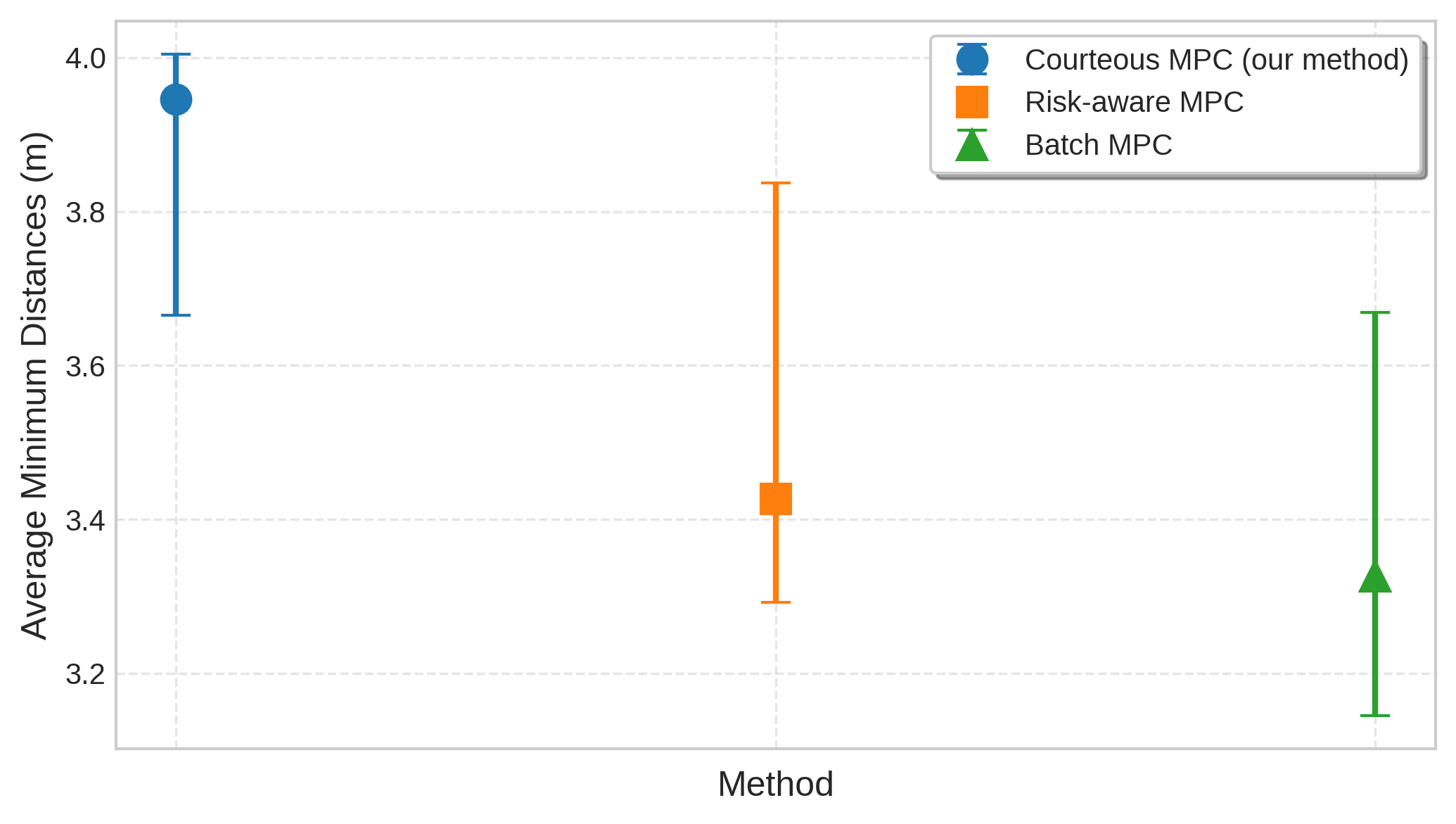}
    \caption{\small Average minimum distance between the ego vehicle and the neighboring IDM vehicles with three methods. The error bar shows the minimum and maximum values.}
    \label{fig:average_min_distance}
    \vspace{-0.6 cm}
\end{figure}

\begin{figure*}[!ht]
\centering
\begin{subfigure}[b]{0.38\textwidth}
  \includegraphics[width=\linewidth]{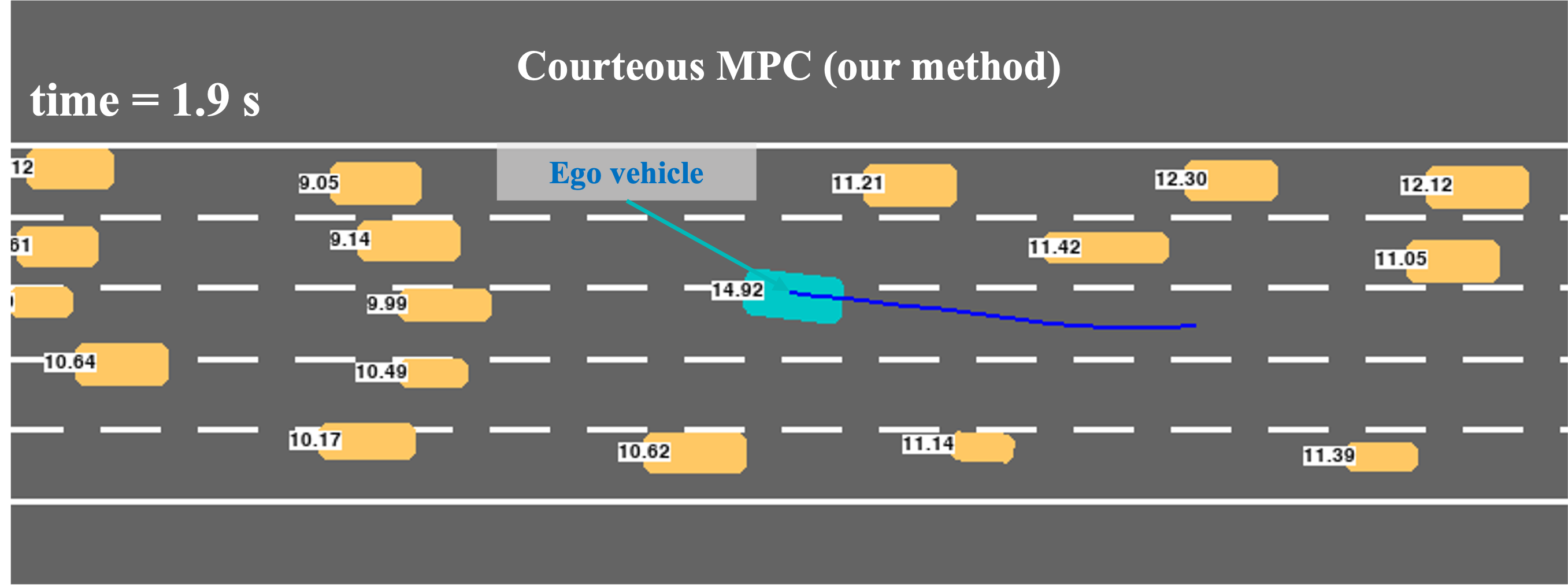}
 \caption{}
  \label{fig:courteousmpc_ngsim1}
  \end{subfigure}
  \qquad
\begin{subfigure}[b]{0.38\textwidth}
  \includegraphics[width=\linewidth]{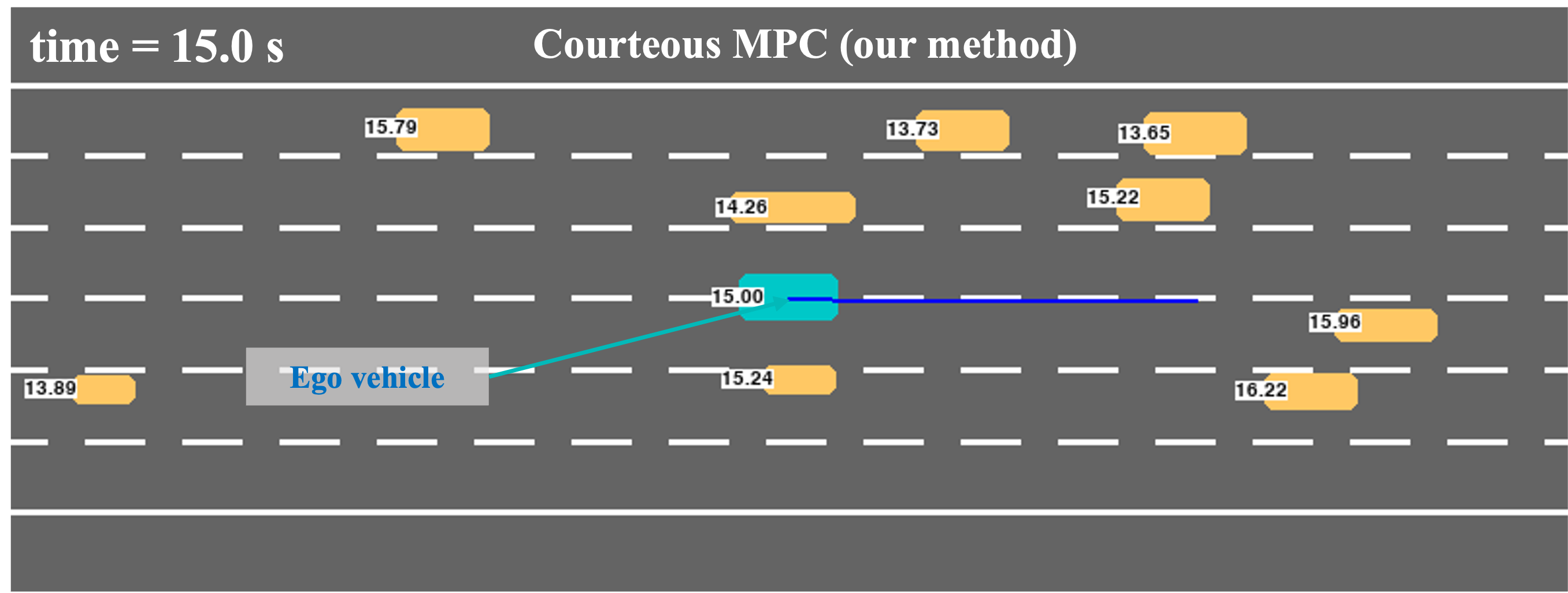}
 \caption{}
  \label{fig:courteousmpc_ngsim2}
\end{subfigure}
\begin{subfigure}[b]{0.38\textwidth}
  \includegraphics[width=\linewidth]{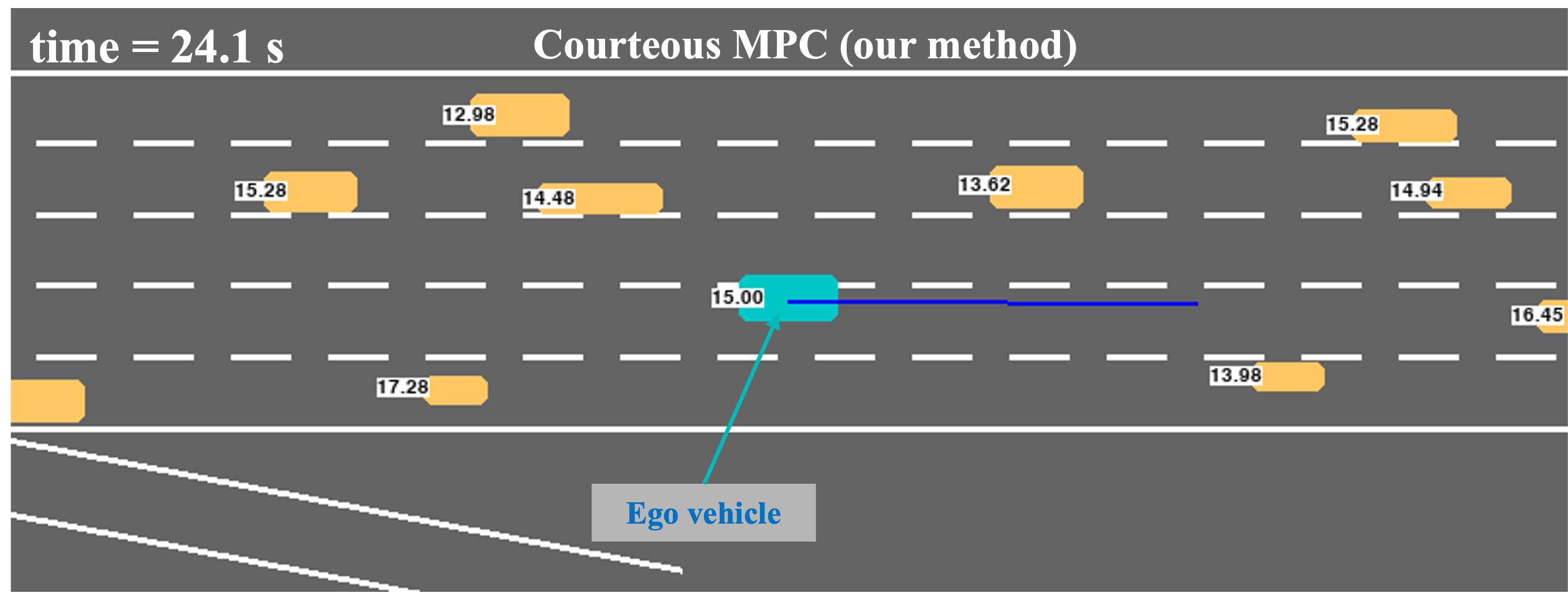}
 \caption{}
  \label{fig:courteousmpc_ngsim3}
   \end{subfigure} 
\qquad
\begin{subfigure}[b]{0.38\textwidth}
  \includegraphics[width=\linewidth]{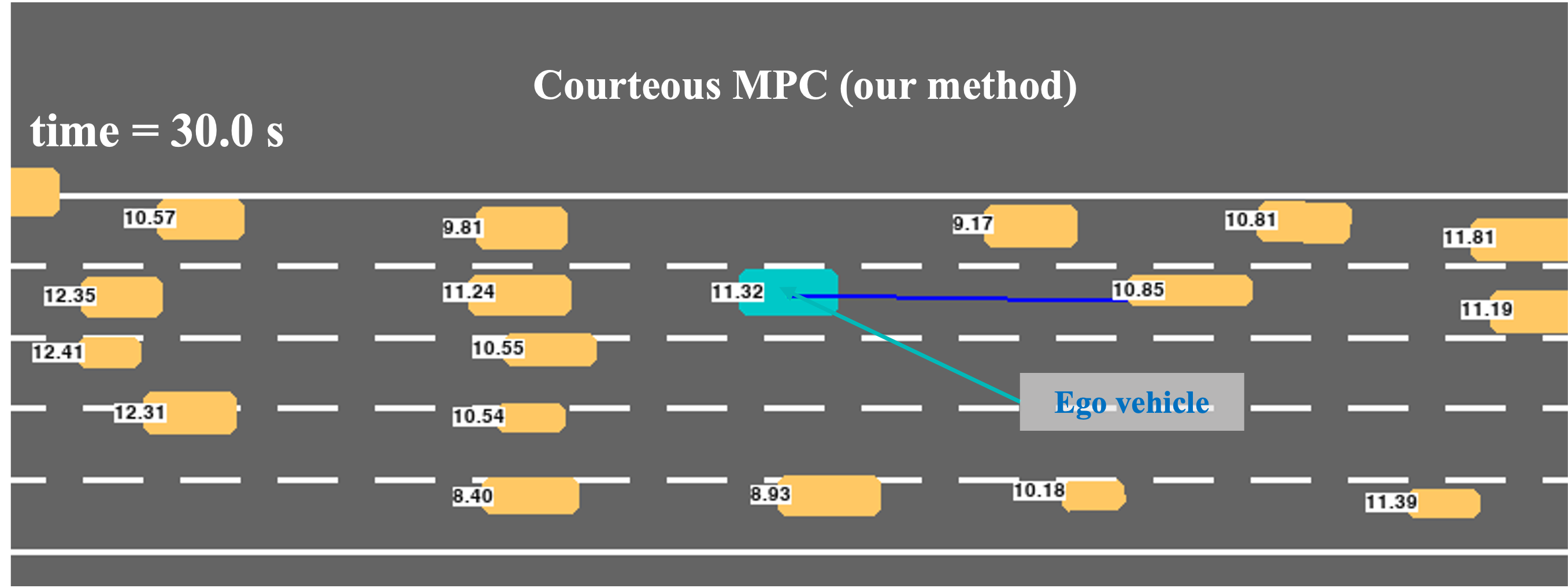}
 \caption{}
  \label{fig:courteousmpc_ngsim4}
   \end{subfigure} 
  \caption{\small Snapshots of performance on NGSIM US101 dataset \cite{alexiadis2004next} using Courteous MPC defined in Eq.~\eqref{eq:courteous MPC}. The value shown behind every vehicle is its speed.}
  \label{fig:performance_ngsim}
  \vspace{-0.75 cm}
\end{figure*}

\section{Conclusions}
In this paper, we proposed an extension of the Control Barrier Functions-inspired risk evaluation framework while considering both noisy observed positions and velocities of neighboring vehicles. The visualization of the risk map suggests that the proposed risk evaluation framework has the potential to aid the decision-making of AVs for safe and courteous driving. The resultant Courteous Model Predictive Control (Courteous MPC) which integrates the risk term into the cost function and constraints shows superior performance through a detailed analysis of the efficacy, efficiency, and robustness using simulation experiments.

\bibliographystyle{IEEEtran}
\bibliography{IEEEexample}

\end{document}